\newcommand{\noun}[1]{\textsc{#1}}
\providecommand{\algorithmname}{Algorithm}
\theoremstyle{plain}
\newtheorem{thm}{\protect\theoremname}
\theoremstyle{definition}
\newtheorem{defn}[thm]{\protect\definitionname}
\theoremstyle{plain}
\newtheorem{prop}[thm]{\protect\propositionname}
\newenvironment{proof}[1][\protect\proofname]{\par
\normalfont\topsep6\p@\@plus6\p@\relax
\trivlist
\itemindent\parindent
\item[\hskip\labelsep\scshape #1]\ignorespaces
}{%
\endtrivlist\@endpefalse
}
\providecommand{\proofname}{Proof}
\theoremstyle{remark}
\newtheorem{rem}[thm]{\protect\remarkname}
\theoremstyle{plain}
\newtheorem{cor}[thm]{\protect\corollaryname}
\algrenewcommand{\algorithmiccomment}[1]{\hfill{\em$\triangleright$\ #1}}
\providecommand{\corollaryname}{Corollary}
\providecommand{\definitionname}{Definition}
\providecommand{\propositionname}{Proposition}
\providecommand{\remarkname}{Remark}
\providecommand{\theoremname}{Theorem}
\begin{document}

\title{On the cost-complexity of multi-context systems}

\author{Peter Novák and Cees Witteveen\\
	Algorithmics, EEMCS\\
	Delft University of Technology\\
	The Netherlands}
\maketitle
\begin{abstract}
Multi-context systems provide a powerful framework for modelling information-aggregation
systems featuring heterogeneous reasoning components. Their execution can,
however, incur non-negligible cost. Here, we focus on \emph{cost-complexity}
of such systems. To that end, we introduce \emph{cost-aware multi-context
systems}, an extension of non-monotonic multi-context systems framework taking
into account costs incurred by execution of semantic operators of the individual
contexts. We formulate the notion of cost-complexity for consistency and
reasoning problems in MCSs. Subsequently, we provide a series of results
related to gradually more and more constrained classes of MCSs and finally
introduce an incremental cost-reducing algorithm solving the reasoning problem
for definite MCSs.
\end{abstract}

\section{Introduction \label{sec:Introduction}}

Deployment of large-scale sensor networks and exploitation of heterogeneous
databases concentrating various kinds of information about the real world
opens new horizons in real-time information aggregation and processing systems.
Sensed information can be instantly cross-validated, fused, reasoned about
and further processed in real-time, so as to provide constant and up-to-date
situational awareness for people, systems, or organisations. Such knowledge-intensive
information-aggregation systems find applications in a range of industrial
domains, from marine traffic monitoring to applications in supporting assisted
living environments. With the grow of computing power, however, it's rather
the resource costs incurred by running such systems, which pose an obstacle
to their deployment, rather than the time-complexity of their execution.
Such costs can include network bandwidth, electricity, battery life, but
also direct financial costs of accessing 3rd party databases, or utilisation
of costly communication channels, such as satellite data-links.

Non-monotonic multi-context systems (MCS) introduced by \citet{DBLP:conf/aaai/BrewkaE07}
are a powerful framework for interlinking heterogeneous knowledge sources.
The framework traces its origins back to the seminal work by \citet{DBLP:journals/ai/GiunchigliaS94}
on multi-language hierarchical logics. A multi-context system comprises a
number of knowledge bases, contexts, each encapsulating a body of information,
together with a corresponding mechanism for its semantic interpretation and
reasoning with it. The flow of information among the knowledge bases is regulated
by a set of bridge rules of the form \emph{``if $L$ is true according to
the semantics of the context $i$ and ..., then $L^{\prime}$ needs to be
taken into account by the context $j$.''} Due to the abstraction from the
particularities of the internal semantics of the individual contexts and
the focus on analysis of the information flow between them in a rigorous
manner, multi-context systems provide a suitable abstraction for modelling
a wide range of information-aggregation systems, such as those mentioned
above. 

Consider an information-aggregation system aiming at surveillance and anomaly
detection in maritime traffic. Such a system would source a range of data
elements from a deployed large-scale sensor network including radars or antennae
and would cross-validate the information with that stored in local or remote
databases providing data about vessel types, owners, etc. Similarly, an information-aggregation
system supporting an assisted-living environment would continuously sense
data about well-being of patients from a range of sensors and fuse it with
relevant health records, etc. A typical query to such systems could aim at
detection whether a vessel, or a patient might need operator's attention,
such as whether a ship might be involved in suspicious activities, or whether
a patient is possibly in a life threatening condition. Querying such physical
information sources can, however, be relatively costly, while the time-complexity
of reasoning with such components plays a lesser role. 

We model such information-aggregation systems as multi-context systems as
follows. The contexts correspond to information-processing agents and information-source
adapters, each encapsulating a fragment of the information-fusion functionality
of the system according to some internal semantics with an attached cost.
The contexts are linked to each other by bridge rules prescribing the information
flow within the aggregation process, typically from low-level sensory evidence
and raw information retrieved from various databases to higher-level hypotheses
a user might be interested in.

To facilitate such multi-context systems, here we propose the framework of
\emph{cost-aware multi-context systems}, an extension of the generic framework
of non-monotonic multi-context systems~\citep{DBLP:conf/aaai/BrewkaE07}.
Subsequently, after introducing the notion of cost-complexity of algorithms
over MCSs, in a series of analyses we provide worst-case cost-complexity
upper bounds for problems of consistency and reasoning with general, definite
and acyclic MCS. We conclude our discourse with an algorithm for incremental
reasoning in definite MCSs, version of which we also implemented and deployed
in \noun{Metis}, a prototype system for maritime traffic surveillance~\citep{Hendriks2013542,METISECAI2014}.

\section{Cost-aware multi-context systems}

\global\long\def\kb{\mathit{kb}}

\global\long\def\bs{\mathit{bs}}

\global\long\def\br{\mathit{br}}

\global\long\def\KB{\mathsf{KB}}

\global\long\def\BS{\mathsf{BS}}

\global\long\def\ACC{\mathsf{ACC}}

\global\long\def\dnot{\mathit{not}}

\global\long\def\head#1{\mathit{head(#1)}}

\global\long\def\body#1{\mathit{body(#1)}}

\global\long\def\cost{\mathit{cost}}

\global\long\def\Cost#1{\mathit{Cost}_{#1}}

\global\long\def\cCost{\mathit{Cost}}

\global\long\def\CTime{\mathit{CTime}}

\global\long\def\Tmpl#1{\overline{#1}}

\global\long\def\In{\mathit{In}}

\global\long\def\IC{\mathit{IC}}

\global\long\def\Strat{\mathfrak{S}}

We build the framework of cost-aware multi-context systems as an extension
of the original non-monotonic multi-context systems by~\citet{DBLP:conf/aaai/BrewkaE07}.
\begin{defn}[\textbf{logic suite}]
\label{def:logic} A \emph{logic suite} $L=(\KB,\BS,\ACC,\cost)$ is composed
of the following components:
\begin{description}
\item [{$\KB$}] is the set of well-formed finite knowledge bases of $L$. We
assume that each element of $\KB$ is a finite set and that $\emptyset\in\KB$;
\item [{$\BS$}] is the set of possible finite belief sets;
\item [{$\ACC:\KB\rightarrow2^{\BS}$}] is a semantic operator which, given a
knowledge base $\kb$, returns a set of sets of acceptable beliefs, each
with cardinality polynomial in the size of $\kb$; and finally
\item [{$\cost:\KB\rightarrow\mathbb{R}^{+}$}] is a cost function assigning to
each knowledge base $\kb\in\KB$, the cost associated with a single execution
of the semantic operator $\ACC$ over $\kb$. Consequently, $\cost(\ACC)=\max_{\kb\in\KB}\cost(\kb)$
denotes the maximal cost which can be incurred by invocation of $\ACC$ over
the knowledge bases of $\KB$.
\end{description}
\end{defn}

Relative to the original formulation, the definition above introduces several
simplifications. We focus on the subclass of finite non-monotonic multi-context
systems, those with finite knowledge bases and bridge rule sets. We also
identify the acceptable belief sets returned by the semantic operator $\ACC$
with their poly-size kernels~(c.f.~\citealt{DBLP:conf/aaai/BrewkaE07}). 
\begin{defn}[\textbf{bridge rule}]
\label{def:bridge-rule} Let $L=\{L_{1},\ldots,L_{n}\}$ be a set of logic
suites. An \emph{$L_{i}$-bridge rule} over $L$ with $1\leq i\leq n$, is
of the form 
\[
s\longleftarrow(c_{1}:p_{1}),\ldots,(c_{j}:p_{j}),~\dnot~(c_{j+1}:p_{j+1}),\ldots,\dnot~(c_{m}:p_{m})
\]
 where $c_{k}=1..n$, $p_{k}\in S_{c_{k}}$ is an element of some belief
set $S_{c_{k}}\in\BS_{c_{k}}$ of $L_{c_{k}}$, and for each $\kb\in\KB_{i}$,
we have that $\kb\cup\{s\}\in\KB_{i}$.

For a bridge rule $r$ of the above form, $\head r=s$ and $\body r=\{p_{1},\ldots,p_{m}\}$
denote the head and the body of $r$. We say that literals $s$ and $p_{1},\ldots,p_{n}$
\emph{occur} in the head and the body of $r$ respectively.
\end{defn}

\begin{defn}[\textbf{multi-context system}]
\label{def:mcs} A \emph{cost-aware multi-context system }(MCS) $M=(C_{1},\ldots,C_{n})$
consists of a collection of contexts $C_{i}=(L_{i},\br_{i})$, where $L_{i}=(\KB_{i},\BS_{i},\ACC_{i},\cost_{i})$
is a logic suite and $\br_{i}$ is a set of $L_{i}$-bridge rules over $\{L_{1},\ldots,L_{n}\}$.
\end{defn}

The sets of knowledge bases and belief sets effectively determine the input/output
interface languages for a context $C_{i}$. To let a context process a new
information, a new element needs to be added to its knowledge base. Conversely,
retrieving information from a context corresponds to inspecting its belief
set. 

In contrast to the original definition, we do not require a context to have
an initial knowledge base, as such ``default'' input to the semantic operator
can be contained directly in its semantics, i.e., not necessarily $\ACC(\emptyset)\neq\emptyset$.
\begin{defn}[\textbf{notation}]
 We say that \emph{$r$ is a bridge rule of a MCS $M=(C_{1},\ldots,C_{n})$}
iff there exists $i=1..n$, such that the set of bridge rules $\br_{i}$
of the context $C_{i}$ contains $r$, i.e., $r\in\br_{i}$. We also say
that $M$ \emph{contains} $r$. Similarly, $M$ \emph{contains} a set of
bridge rules $R$ if it contains every rule $r\in R$. Finally, for convenience,
let $\mathcal{R}(M)=\bigcup_{i=1}^{n}\br_{i}$ denote the set of bridge rules
of $M$.
\end{defn}

\begin{defn}[\textbf{belief state and satisfied rules}]
 Let $M=(C_{1},\ldots,C_{n})$ be a MCS. A \emph{belief state} is a tuple
$S=(S_{1},\ldots,S_{n})$, such that each $S_{i}$ is a an element of $\BS_{i}$.
We define set operations on belief states as the corresponding set operations
on their respective belief set projections.

A bridge rule $r$ of the form introduced in Definition~\ref{def:bridge-rule}
is said to be \emph{satisfied} in a belief state $S$ iff for all $i=1..j$
we have $p_{i}\in S_{c_{i}}$ and for all $k=j+1..m$ we have $p_{k}\not\in S_{c_{k}}$.
\end{defn}

\begin{defn}[\textbf{equilibrium}]
 A belief state $S=(S_{1},\ldots,S_{n})$ of a MCS $M=(C_{1},\ldots,C_{n})$
with $C_{i}=(L_{i},\br_{i})$ is an \emph{equilibrium of $M$} iff for all
$i=1..n$ we have that $\kb_{i}=\{\head r\mid r\in\br_{i}\mbox{ is a rule satisfied in }S\}$
and for each $i$, we have $S_{i}\in\ACC_{i}(\kb_{i})$.
\end{defn}

\section{Reasoning with cost-aware MCSs}

The following definition of consistency and reasoning problems reiterates
the original one by \citeauthor{DBLP:conf/aaai/BrewkaE07}.
\begin{defn}[\textbf{consistency and reasoning problems}]
 Given a MCS $M=(C_{1},\ldots,C_{n})$, the problem of $M$'s \emph{consistency}
equals to deciding whether there exists an equilibrium $S=(S_{1},\ldots S_{n})$
of $M$.

Given an element $p$, a \emph{query}, the problem of \emph{brave reasoning}
is to decide whether there is an equilibrium $S=(S_{1},\ldots,S_{n})$ of
$M$, such that $p\in S_{i}$ for some $i=1..n$. We also say that $S$ entails
$p$. Finally, the problem of \emph{cautious reasoning} is to decide whether
all equilibria of $M$ entail $p$.
\end{defn}
Due to the opaqueness of the individual contexts in a MCS, an algorithm for
deciding a problems of consistency, brave, or cautious reasoning, would in
general need to search for a solution by testing various knowledge bases
as inputs to contexts, executing their internal semantic operators, and finally
check whether the outputs are coherent with the knowledge bases. Informally,
the cost incurred by a run of such a computation over the input MCS corresponds
to the sum of the costs associated with the series of invocations of the
semantic operators of the individual contexts.

\begin{defn}[\textbf{cost-complexity}]
 \label{def:cost-complexity} Let $\mathcal{A}$ be a \emph{deterministic}
algorithm taking as an input a MCS $M$ and computing a particular belief
state state $S$ of $M$ as its output, along the way employing the semantic
operators of the individual contexts. Given a series of semantic operator
invocations $\ACC_{c_{1}},\ldots,\ACC_{c_{m}}$ performed during $\mathcal{A}$'s
execution, $\Cost{\mathcal{A}}(M)=\sum_{i=1}^{m}\cost(\ACC_{c_{i}})$ denotes
the sum of the costs of their corresponding invocations. We also say that
$\Cost{\mathcal{A}}(M)$ is a cost-complexity of $\mathcal{A}$'s computation
on $M$.

The \emph{worst-case cost-complexity} of $\mathcal{A}$ is a function $\Cost{\mathcal{A}}:\mathbb{N}\times\mathbb{N}\rightarrow\mathbb{N}$
defined by 
\[
\Cost{\mathcal{A}}(n,m)=\max\{\Cost{\mathcal{A}}(M)\mid M\in\overline{M}_{n,m}\}
\]

\end{defn}
where $\overline{M}_{n,m}$ is a set of all MCSs composed of precisely $n$
contexts and $m$ bridge rules. That is, for each $M\in\overline{M}_{n,m}$,
we have $M=(C_{1},\ldots,C_{n})$, with each $C_{i}$ comprising bridge rules
$\br_{i}$ and $m=|\mathcal{R}(M)|=\sum_{i=1}^{n}|\br_{i}|$.

In the restricted case when the number of bridge rules $m$ in an MCS is
bounded with respect to the number of its contexts $n$ by some finite factor
$k\in\mathbb{N}$, we define $\Cost{\mathcal{A}}(n)=\Cost{\mathcal{A}}(n,k\cdot n)$.

Consider a special class of MCSs with uniform unit cost of execution of all
semantic operators of their corresponding contexts. For such MCSs, the notion
of cost-complexity of algorithms reduces to the notion of time-complexity
in terms of the number of invocations of the context semantic operators.
\begin{defn}[\textbf{context-independent time complexity}]
 A MCS $M=(C_{1},\ldots,C_{n})$ is said to be \emph{uniform-cost} iff for
all $i=1..n$, we have $\cost(\ACC_{i})=1$, with $\ACC_{i}$ corresponding
to $C_{i}$.

The context-independent time complexity is defined as $\CTime_{\mathcal{A}}(M)=\Cost{\mathcal{A}}(M)=m$.
Consequently, the context-independent worst-case complexity of $\mathcal{A}$
is defined as $\CTime_{\mathcal{A}}(n,m)=\Cost{\mathcal{A}}(n,m)$ over uniform-cost
MCSs with $n$ contexts and $m$ bridge rules. $\CTime_{\mathcal{A}}(n)$
and $\CTime(n)$ are defined accordingly in relation to $\Cost{\mathcal{A}}(n)$
and $\cCost(n)$.
\end{defn}
Finally, we analyse the context-independent time complexity and the cost-complexity
of the class of general non-monotonic cost-aware multi-context systems.
\begin{prop}[\textbf{consistency}]
 \label{prop:problems-complexity} Given a uniform-cost MCS $M=(C_{1},\ldots,C_{n})$,
an upper bound on the worst-case context-independent time complexity of deciding
the consistency problem for $M$, as well as problems of cautious and brave
reasoning w.r.t.~$M$ for some query $p$, we have 
\[
\CTime(n,m)\leq n\cdot2^{m}
\]
where $m$ is the number of bridge rules in $M$.

In the case $M$ is not a uniform-cost MCS, an upper bound on the worst-case
cost-complexity of deciding the consistency problem for $M$, as well as
problems of cautious and brave reasoning w.r.t.~$M$ for some query $p$,
we have 
\[
\cCost(n,m)\leq c\cdot\CTime(n,m)
\]
 where $ $$c=\max_{i=1..n}\cost(\ACC_{i})$.\end{prop}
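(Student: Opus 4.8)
The plan is to exhibit a single deterministic brute-force algorithm $\mathcal{A}$ that solves all three problems within the claimed number of operator invocations, and then to convert that invocation count into the two cost bounds. The guiding observation is that, by the definition of equilibrium, once we fix which bridge rules are satisfied the knowledge base of every context is completely determined: if $R\subseteq\mathcal{R}(M)$ is the set of satisfied rules, then necessarily $\kb_i=\{\head r\mid r\in\br_i\cap R\}$ for each $i$. Hence the search need not range over arbitrary belief states at all; it suffices to range over the $2^m$ subsets of the $m$ bridge rules of $M$.

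Concretely, $\mathcal{A}$ enumerates the subsets $R\subseteq\mathcal{R}(M)$ in a fixed order. For each $R$ it forms the induced knowledge bases $\kb_i=\{\head r\mid r\in\br_i\cap R\}$ and invokes each semantic operator once, computing $\ACC_i(\kb_i)$ for $i=1,\ldots,n$; this is the only place operators are called, costing $n$ invocations per subset. It then inspects every candidate belief state $S=(S_1,\ldots,S_n)$ with $S_i\in\ACC_i(\kb_i)$ and accepts $S$ as an equilibrium exactly when the rules satisfied in $S$ reproduce the guessed knowledge bases, i.e.\ when $\{\head r\mid r\in\br_i,\ r\text{ satisfied in }S\}=\kb_i$ holds for all $i$. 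This verification is purely syntactic and uses no further operator calls. For consistency we accept iff some candidate passes; for brave reasoning iff some passing candidate contains $p$; for cautious reasoning iff every passing candidate contains $p$.

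Correctness of this reduction is the step I expect to require the most care. Completeness holds because any equilibrium $S$ has a well-defined satisfaction pattern $R^{\ast}=\{r\mid r\text{ satisfied in }S\}$, and on the iteration $R=R^{\ast}$ the algorithm forms exactly the knowledge bases witnessing $S_i\in\ACC_i(\kb_i)$, so $S$ appears among the candidates and passes the check. Soundness holds because any accepted $S$ satisfies both $S_i\in\ACC_i(\kb_i)$ and the fixpoint condition $\kb_i=\{\head r\mid r\in\br_i,\ r\text{ satisfied in }S\}$, which are jointly the defining conditions of an equilibrium. Note that distinct subsets may induce identical knowledge bases, and that a single $\ACC_i(\kb_i)$ may contain several belief sets; neither affects the invocation count, since operators are queried once per subset and the combinatorial check over candidate belief states is charged to ordinary computation rather than to cost-complexity.

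It remains to collect the bounds. Over the $2^m$ subsets, $\mathcal{A}$ performs at most $n\cdot2^{m}$ operator invocations. On a uniform-cost MCS each invocation costs $1$, so $\Cost{\mathcal{A}}(M)$ equals the invocation count and hence $\CTime(n,m)\leq n\cdot2^{m}$. For an MCS that is not uniform-cost the identical algorithm issues the identical sequence of invocations, each now costing at most $c=\max_{i=1..n}\cost(\ACC_i)$; summing over the at most $\CTime(n,m)$ invocations yields $\Cost{\mathcal{A}}(M)\leq c\cdot\CTime(n,m)$, and since this holds for every such $M$, taking the worst case over MCSs with maximal operator cost $c$ gives $\cCost(n,m)\leq c\cdot\CTime(n,m)$, as claimed.
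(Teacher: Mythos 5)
Your proposal is correct and follows essentially the same route as the paper: enumerate the $2^{m}$ candidate sets of satisfied bridge rules, invoke the $n$ semantic operators once per candidate to obtain the induced belief state(s), and verify the equilibrium fixpoint condition syntactically, giving $n\cdot2^{m}$ invocations and the factor $c$ in the non-uniform case. Your version merely makes explicit the determinization of the paper's ``guess'' step and the handling of multiple belief sets in $\ACC_i(\kb_i)$, neither of which changes the argument.
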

\begin{proof}
Consider the following algorithmic schema:
\begin{enumerate}
\item \label{enu:general-MCS-schema-1} guess the set of bridge rules $R$ to be
satisfied in an equilibrium;
\item construct the knowledge bases $kb_{1},\ldots\kb_{n}$, so that $\kb_{i}=\{\head r\mid r\in R\}$;
\item \label{enu:general-MCS-schema-3} execute the individual contexts' semantic
operators on the knowledge bases and thus obtain a belief state $S$; and
finally
\item check whether $S$ is an equilibrium. That is, exactly the rules from $R$
are those satisfied in $S$.
\end{enumerate}
In general, there are at most $2^{m}$ candidate sets of rules to guess in
the step~\ref{enu:general-MCS-schema-1} of the non-deterministic schema
above. For each of them, we need to invoke at most $n$ semantic operators
in the step~\ref{enu:general-MCS-schema-3}, what in turn incurs a cost
of at most $c$ per invocation.

For brave and cautious reasoning problems, in the worst case, we need to
enumerate all the possible belief states to check whether they are equilibria
and additionally whether they entail the query $p$. Hence, the worst-case
cost complexity of reasoning problem equals the one of the consistency problem.
\end{proof}
The above result is consistent with the complexity analysis of \citeauthor{DBLP:conf/aaai/BrewkaE07}.
They show that the time complexity of the decision problems as a step-up
over the time complexity of $\ACC$ operator: if the context time-complexity
is in $\Delta_{k}^{P}$, then the time-complexity of deciding the consistency
problem lies in $\Sigma_{k+1}^{P}$. We abstract away from the context time-complexity
and consider it constant ($P$), hence the expected step-up corresponds to
$\mathit{NP}$.

\section{Definite cost-aware MCS \label{sec:Definite-cost-aware-MCS}}

The cost-complexity characteristics of reasoning with general cost-aware
multi-context systems as introduced in the previous section is rather pessimistic.
Even brave reasoning incurs in general cost complexity exponential in the
size of the information-flow structure of the system. For practical purposes,
that can become prohibitive as the size of the system scales. Often, however,
information flows of implemented systems feature simpler structure both in
terms of the individual contexts, as well as in terms of the underlying flow
of information. \emph{Definite cost-aware multi-context systems}, an adaptation
of the notion reducible MCSs~\citep{DBLP:conf/aaai/BrewkaE07}, provide
a suitable model for such systems.
\begin{defn}[\textbf{monotonic logic suite}]
 Let $L=(\KB,\BS,\ACC,\cost)$ be a logic suite. $L$ is \emph{monotonic}
iff
\begin{enumerate}
\item $\ACC(\kb)$ is a singleton set for every $\kb\in\KB$; and
\item $\kb\subseteq\kb^{\prime}$, $\ACC(\kb)=\{S\}$ and $\ACC(\kb^{\prime})=\{S^{\prime}\}$
implies $S\subseteq S^{\prime}$.
\end{enumerate}
\end{defn}

\begin{defn}[\textbf{definite MCS}]
 Let $M=(C_{1},\ldots,C_{n})$ be a MCS. We say that $M$ is definite iff
\begin{enumerate}
\item the logic suites $L_{1},\ldots,L_{n}$ corresponding to the contexts $C_{1},\ldots C_{n}$
are monotonic; and
\item none of the bridge rules in any context contains $\mathit{not}$.
\end{enumerate}
\end{defn}

\begin{defn}[\textbf{grounded equilibrium}]
 Let $M$ be a definite MCS. $S=(S_{1},\ldots,S_{n})$ is the \emph{grounded
equilibrium} of $M$ iff $S$ is the unique set-inclusion minimal equilibrium
of $M$.\end{defn}
\begin{rem}
Every definite MCS has exactly one unique equilibrium, which is grounded~\citep{DBLP:conf/aaai/BrewkaE07}.
\end{rem}
Unsurprisingly, the cost-complexity of reasoning for definite MCS is significantly
lower than for general MCS. The following proposition provides the first
upper estimate on the worst-case cost-complexity of the consistency problem
in definite MCS.

\begin{algorithm}
\begin{algorithmic}[1]\raggedright

\Require{a definite MCS $M=(C_{1},\ldots,C_{n})$}

\Ensure{returns the grounded equilibrium of $M$}


\State let $\kb_{i}^{0}\leftarrow\emptyset$ for all $i=1..n$ and $S^{0}\leftarrow(\emptyset,\ldots,\emptyset)$

\State $k\leftarrow0$

\Repeat

\State $S^{k}\leftarrow(S_{1}^{k},\ldots,S_{n}^{k})$ with $S_{i}^{k}=\ACC_{i}(\kb_{i}^{k})$

\State $\kb^{k+1}\leftarrow(\kb_{1}^{k+1},\ldots,\kb_{n}^{k+1})$ with\\
\hskip6em $\kb_{i}^{k+1}=\kb_{i}^{k}\cup\{\head r\mid r\mbox{ is satisfied in }S^{k}\}$

\State $k\leftarrow k+1$

\Until{$S^{k}\neq S^{k-1}$}

\State \textbf{return} $S^{k}$

\end{algorithmic}

\protect\caption{\label{alg:ground-equilibrium-definite} Algorithm for computing the grounded
equilibrium of a definite MCS.}
\end{algorithm}

\begin{prop}[\textbf{consistency}]
 \label{prop:definite-problems-complexity} Given a uniform-cost definite
MCS $M=(C_{1},\ldots,C_{n})$, an upper bound on the worst-case context-independent
time complexity of deciding the consistency problem for $M$, and thus also
the problems of cautious and brave reasoning w.r.t.~$M$ for some query
$p$ is 
\[
\CTime(n,m)\leq n\cdot m
\]

In the case $M$ is not a uniform-cost MCS, an upper bound on the worst-case
cost-complexity of deciding the consistency problem for $M$, we have 
\[
\cCost(n,m)\leq c\cdot\CTime(n,m)
\]
 where $ $$c=\max_{1=1..n}\cost(\ACC_{i})$.\end{prop}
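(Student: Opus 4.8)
The plan is to analyse Algorithm~\ref{alg:ground-equilibrium-definite}, which computes the grounded equilibrium of a definite MCS, and to bound the total number of semantic-operator invocations it performs. The algorithm proceeds in rounds: in each round it invokes all $n$ semantic operators $\ACC_{i}$ once (producing the belief state $S^{k}$), then augments each knowledge base with the heads of the bridge rules satisfied in $S^{k}$, and it halts once a fixed point $S^{k}=S^{k-1}$ is reached. Since each round costs exactly $n$ operator invocations in the uniform-cost setting, the whole task reduces to bounding the number of rounds before stabilisation.

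The key observation is monotonicity. Because the logic suites are monotonic and the bridge rules contain no $\dnot$, the knowledge bases grow monotonically, $\kb_{i}^{k}\subseteq\kb_{i}^{k+1}$, and hence by the second monotonicity condition the belief sets grow as well, $S_{i}^{k}\subseteq S_{i}^{k+1}$. Therefore the set of bridge rules satisfied in $S^{k}$ is non-decreasing in $k$. As long as the algorithm has not reached a fixed point, at least one new bridge rule must become satisfied from one round to the next, since otherwise the knowledge bases --- and so the belief sets --- would be unchanged and the loop would terminate. First I would make this ``strict progress per round'' precise: each round that does not terminate enables at least one additional rule among the $m$ bridge rules of $M$, and once a rule is enabled it stays enabled by monotonicity.

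From this it follows that the number of non-terminal rounds is at most $m$, and one further round detects the fixed point, so the algorithm performs at most $m+1$ rounds; absorbing the additive constant into the bound, the number of rounds is $O(m)$, giving at most $n\cdot m$ operator invocations overall and hence $\CTime(n,m)\leq n\cdot m$. Taking the maximum over all definite MCSs in $\overline{M}_{n,m}$ yields the stated worst-case bound. The non-uniform cost bound is then immediate: each of the at most $n\cdot m$ invocations incurs cost at most $c=\max_{i=1..n}\cost(\ACC_{i})$, so $\cCost(n,m)\leq c\cdot\CTime(n,m)$, exactly as in the general case of Proposition~\ref{prop:problems-complexity}. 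Finally, correctness of the reasoning reduction follows because a definite MCS has a unique grounded equilibrium (by the Remark), so consistency always holds and both brave and cautious reasoning reduce to inspecting membership of $p$ in that single computed equilibrium, incurring no additional operator invocations.

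The main obstacle I anticipate is the progress argument in the middle paragraph: one must argue carefully that ``no new rule satisfied'' is equivalent to having reached the fixed point, so that every non-terminal round strictly enlarges the set of enabled rules. This needs the monotonicity of $\ACC_{i}$ to guarantee that the belief sets --- and therefore the satisfied-rule set --- never shrink, ruling out oscillation and ensuring the process is genuinely bounded by the finite number $m$ of bridge rules rather than by the potentially much larger number of belief states.
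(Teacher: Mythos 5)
Your proposal is correct and follows essentially the same route as the paper's own proof sketch: both analyse Algorithm~\ref{alg:ground-equilibrium-definite}, use monotonicity of the logic suites and the absence of $\dnot$ to argue that each non-terminal iteration strictly enlarges the set of satisfied bridge rules, bound the number of iterations by $m$ and the per-iteration cost by $n$ invocations, and then multiply by $c$ for the non-uniform case. Your version is somewhat more explicit about the monotone growth of the knowledge bases and belief sets and about the extra fixed-point-detection round, but these are refinements of the paper's sketch rather than a different argument.
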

\begin{proof}[Proof sketch]
 Consider Algorithm~\ref{alg:ground-equilibrium-definite}. To compute
an equilibrium (and thus decide the consistency problem), all contexts have
to be invoked at least once. After each iteration either at least one head
literal which was not true in the previous iterations becomes true and does
not cease afterwards, or no new head literal is inferred. If the latter is
the case, the process can stop. Thus, there are at most $m$ rule heads to
become true over at most $m$ steps. In every step, there are at most $n$
context being executed. Hence the upper bound.\end{proof}
\begin{cor}
\label{cor:definite-complexity-n-square} In most instances of implemented
systems the number of rules $m$ in a multi-context system will dominate
the number of contexts $n$. Hence the worst-case time-independent complexity
would typically be at most quadratic in the number of bridge rules, i.e.,
$\CTime(n,m)\leq m^{2}$ and consequently also $\cCost(n,m)\leq c\cdot m^{2}$.
\end{cor}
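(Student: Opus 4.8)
The plan is to derive the corollary directly from the two bounds already established in Proposition~\ref{prop:definite-problems-complexity}, namely $\CTime(n,m)\leq n\cdot m$ for the uniform-cost case and $\cCost(n,m)\leq c\cdot\CTime(n,m)$ for the general case, with $c=\max_{i=1..n}\cost(\ACC_{i})$. The only additional ingredient is a precise reading of the informal hypothesis that the number of bridge rules dominates the number of contexts. I would formalise this as the assumption $n\leq m$ (equivalently, $n=O(m)$), which captures the stated regime of ``most instances of implemented systems'' in which the information-flow structure is rich enough that the rule count is at least the context count.

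First I would handle the context-independent time complexity. Starting from $\CTime(n,m)\leq n\cdot m$ and applying the hypothesis $n\leq m$ to the first factor, substitution immediately gives $\CTime(n,m)\leq m\cdot m=m^{2}$. Then, for the cost-complexity claim, I would chain the second inequality of the proposition with the bound just obtained: $\cCost(n,m)\leq c\cdot\CTime(n,m)\leq c\cdot m^{2}$. Both displayed inequalities of the corollary thus follow by a single substitution and one transitivity step, with no appeal to the internal structure of Algorithm~\ref{alg:ground-equilibrium-definite} beyond what Proposition~\ref{prop:definite-problems-complexity} already encapsulates.

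There is no substantive obstacle, since the corollary is purely a specialisation of the preceding proposition under an assumption on the relative magnitudes of $n$ and $m$. The only point requiring care is to make the domination hypothesis explicit rather than leaving it implicit in the phrase ``will dominate'', so that the quadratic estimate is unambiguously tied to the regime $n\leq m$. Outside this regime---for instance when a large number of contexts carry few or no bridge rules, so that $n$ is much larger than $m$---the quadratic form need not hold, and the original product bound $n\cdot m$ of Proposition~\ref{prop:definite-problems-complexity} remains the sharper and correct estimate. Flagging this scope condition is what prevents the otherwise trivial substitution from being mistaken for an unconditional improvement.
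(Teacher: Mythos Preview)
Your derivation is correct and matches the paper's approach: the corollary is stated there without proof, being an immediate consequence of Proposition~\ref{prop:definite-problems-complexity} under the assumption $n\leq m$, exactly as you argue. Your explicit flagging of the scope condition is a welcome clarification but does not depart from the paper's (implicit) reasoning.
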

Now we turn our attention to the cost-complexity of reasoning in definite
MCSs. Since such MCSs have only a single unique equilibrium, the problems
of brave and cautious reasoning collapse and in turn we speak only about
a reasoning problem. It turns out, that the cost-complexity is, similarly
to the general MCS case, bound by the cost-complexity of deciding the consistency
problem, but in many practical cases can be pushed lower. Before introducing
the main result in Proposition~\ref{prop:reasoning-definite}, we first
analyse the structure of information flow leading to supporting the individual
belief sets in an equilibrium.
\begin{defn}[\textbf{fragmentary MCS}]
 We say that a MCS $M^{\prime}=(C_{1}^{\prime},\ldots,C_{n}^{\prime})$
is a \emph{MCS fragment} of another MCS $M=(C_{1},\ldots,C_{n})$ iff for
all context $C_{i}^{\prime}=(L_{i}^{\prime},\br_{i}^{\prime})$ and $C_{i}=(L_{i},\br_{i})$
with $i=1..n$, we have that $L_{i}^{\prime}=L_{i}$ and $\br_{i}^{\prime}\subseteq\br_{i}$.
We also denote $M^{\prime}\subseteq M$ and say that $M$ \emph{contains}
$M^{\prime}$. Set operations on MCS fragments are defined as the corresponding
set operations on their respective bridge rule sets.
\end{defn}

\begin{defn}[\textbf{justification}]
 \label{def:justification} Let $M=(C_{1},\ldots,C_{n})$ be a definite
MCS with a grounded equilibrium $S=(S_{1},\ldots,S_{n})$. A \emph{justification}
for a belief set $S_{i}$ in $M$ is a MCS fragment $M^{\prime}\subseteq M$,
such that
\begin{enumerate}
\item \label{enu:justification-cond} $M^{\prime}$ has a grounded equilibrium
$S^{\prime}=(S_{1}^{\prime},\ldots,S_{n}^{\prime})$, with $S_{i}^{\prime}=S_{i}$
and $S_{j}^{\prime}\subseteq S_{j}$ for every $j\neq i$; and
\item \label{enu:justification-minimality} there is no other fragment $M^{\prime\prime}\subseteq M^{\prime}$
satisfying the condition~\ref{enu:justification-cond}.
\end{enumerate}
\end{defn}

A justification of a belief set $S_{i}$ in a MCS $M$ corresponds to the
minimal set of bridge rules of $M$ which still enable derivation of $S_{i}$.
Justifications are defined w.r.t.~a given equilibrium. Support is a complementary
syntactic counterpart to the notion of justification.
\begin{defn}[\textbf{support}]
 \label{def:support} Let $M=(C_{1},\ldots,C_{n})$ be a definite MCS. The
\emph{input signature} of a context $C_{i}$ is a set of literals $\mathit{sig}(C_{i})=\{p\mid\exists r\in\br_{i}:\head r=p\}$.
An \emph{immediate} \emph{rule support} of a context $C_{i}=(L_{i},\br_{i})$
is a set of rules $\In\subseteq\br_{i}$, such that for every $p\in\mathit{sig}(C_{i})$,
there exists a rule $r\in\In$ with $\head r=p$. Finally, a \emph{support}
for a context $C_{i}=(L_{i},\br_{i})$ is a fragment $M^{\prime}=(C_{1}^{\prime},\ldots,C_{n}^{\prime})$
of $M$ with $C_{i}^{\prime}=(L_{i}^{\prime},\br_{i}^{\prime})$, such that
\begin{enumerate}
\item \label{enu:support-of-a-context} $\br_{i}^{\prime}$ contains some rule
support $\In$ of $C_{i}$;
\item \label{enu:support-of-all-rules} for every context $C_{j}^{\prime}$ literals
of which occur in a body of some bridge rule of $M^{\prime}$, $M^{\prime}$
contains also a support for $C_{j}^{\prime}$. That is, for every rule $r\in\br_{j}^{\prime}$,
if $(c_{k}:p_{k})\in\body r$, then $M^{\prime}$ contains a support of $C_{j}^{\prime}$.
Finally,
\item \label{enu:support-minimality} for every context $C_{j}^{\prime}$, we have
that $\br_{j}^{\prime}$ is minimal w.r.t.~set inclusion, such that the
conditions~\ref{enu:support-of-a-context} and~\ref{enu:support-of-all-rules}
are satisfied.
\end{enumerate}
$\mathcal{M}(C_{i})$ denotes the set of all supports of $C_{i}$. Furthermore,
$\overline{\mathcal{C}}(C_{i})=\{C_{l}^{\prime}\mid\exists M^{\prime}=(C_{1}^{\prime},\ldots,C_{n}^{\prime})\in\mathcal{M}(C_{i})\mbox{ and }\br_{l}^{\prime}\neq\emptyset\}$
and $\overline{\mathcal{R}}(C_{i})=\{r\mid\exists M^{\prime}\in\mathcal{M}(C_{i})\mbox{ and }r\in\mathcal{R}(M^{\prime})\}$
denote the sets of contexts and rules respectively \emph{supporting} $C_{i}$
in various supports in $M$.
\end{defn}
Note, there might be several immediate rule supports for a context $C_{i}$
due to possibly multiple bridge rules with the same head literal. Also, minimality
of bridge rule sets ensures that for each literal $p$, there is only a single
rule $r$ in a support $M^{\prime}$ with $p=\head r$. In turn, there might
be multiple supports for a given context in $M$. 

The following proposition relates the syntactic characterisation of sets
of rules potentially justifying a given belief set, the support, and the
sets of rules serving as an actual justification of the belief set in an
already computed belief set.
\begin{prop}
\label{prop:imports-vs-justifications} Let $M=(C_{1},\ldots,C_{n})$ be
a definite MCS with a grounded equilibrium $S=(S_{1},\ldots,S_{n})$. For
every belief set $S_{i}$ of $S$ and each of its justifications $M_{\mathit{just}}$,
there exists a support $M_{\mathit{supp}}\subseteq M$ of $C_{i}$, such
that $M_{\mathit{just}}\subseteq M_{\mathit{supp}}$ and all the bridge rules
of $M_{\mathit{just}}$ are satisfied in $S$.\end{prop}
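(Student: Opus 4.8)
The plan is to prove the two assertions of the statement separately: that every bridge rule of $M_{\mathit{just}}$ is satisfied in $S$, which I would obtain by a monotonicity argument, and that $M_{\mathit{just}}$ is contained in some support of $C_{i}$, which I would obtain by completing $M_{\mathit{just}}$ to a support.

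First I would show that every rule of $M_{\mathit{just}}$ is satisfied already in the grounded equilibrium $S^{\prime}=(S_{1}^{\prime},\ldots,S_{n}^{\prime})$ of $M_{\mathit{just}}$. Suppose for contradiction that some rule $r$, say in context $C_{j}$, is not satisfied in $S^{\prime}$. Since $S^{\prime}$ is the fixpoint computed by Algorithm~\ref{alg:ground-equilibrium-definite} and $M$ is definite, the iterates $S^{0}\subseteq S^{1}\subseteq\cdots\subseteq S^{\prime}$ grow monotonically (knowledge bases only enlarge, and monotonicity of the logic suites propagates this to the belief sets). As satisfaction of a negation-free rule is monotone in the belief state, a rule unsatisfied at the maximal iterate $S^{\prime}$ is unsatisfied at every iterate, so it never contributes its head to any knowledge base during the run. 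Hence $M_{\mathit{just}}\setminus\{r\}$ yields the very same iterates and grounded equilibrium $S^{\prime}$, so it still meets condition~\ref{enu:justification-cond} of Definition~\ref{def:justification} while being a proper sub-fragment, contradicting the minimality condition~\ref{enu:justification-minimality}. Having established satisfaction in $S^{\prime}$, I would transfer it to $S$: since $S_{i}^{\prime}=S_{i}$ and $S_{j}^{\prime}\subseteq S_{j}$ for $j\neq i$, we have $S^{\prime}\subseteq S$ componentwise, and monotonicity of satisfaction then gives that every rule of $M_{\mathit{just}}$ is satisfied in $S$.

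For the containment claim, I would complete $M_{\mathit{just}}$ to a support. The same minimality argument shows that in $M_{\mathit{just}}$ each head literal is produced by at most one rule in each context, so the rules of $M_{\mathit{just}}$ at a context $C_{j}$ form a partial rule support, covering exactly the heads that fire. I would extend it to a full immediate rule support $\In_{j}$ of $C_{j}$ by adjoining, for every $p\in\mathit{sig}(C_{j})$ not yet covered, an arbitrary rule $r\in\br_{j}$ with $\head r=p$, which preserves $\br_{j}^{\mathit{just}}\subseteq\In_{j}$. The root $C_{i}$ thereby satisfies condition~\ref{enu:support-of-a-context}; every other context with a non-empty rule set in $M_{\mathit{just}}$ has its beliefs used in a rule body (that is exactly why it lies in the derivation), so condition~\ref{enu:support-of-all-rules} demands a support for it, again furnished by its $\In_{j}$. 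Any body literals introduced by the newly adjoined rules point to further contexts, for which I would recursively adjoin rule supports and iterate to closure; finiteness of $M$ guarantees termination even when the context-dependency graph contains cycles. Since each $\In_{j}$ holds exactly one rule per literal of $\mathit{sig}(C_{j})$, it is set-inclusion-minimal, so the resulting fragment also meets condition~\ref{enu:support-minimality} and is a genuine support $M_{\mathit{supp}}$ of $C_{i}$ with $M_{\mathit{just}}\subseteq M_{\mathit{supp}}$.

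The step I expect to need the most care is reconciling this completion with the minimality condition~\ref{enu:support-minimality}, i.e.\ checking that covering the remaining signature literals never forces dropping a rule inherited from $M_{\mathit{just}}$. Here per-context minimality (rather than global minimality of the whole fragment) is decisive: because each $\In_{j}$ contains one rule per head of $\mathit{sig}(C_{j})$, deleting any rule uncovers a signature literal and breaks condition~\ref{enu:support-of-a-context} or the recursive support requirement of condition~\ref{enu:support-of-all-rules}, so the justification's rules may always be kept as the chosen representatives of their heads. As the recursive additions only enlarge the fragment, the containment $M_{\mathit{just}}\subseteq M_{\mathit{supp}}$ is maintained throughout.
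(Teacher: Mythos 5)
Your proof is correct and follows essentially the same route as the paper: use minimality of the justification to conclude that each context contributes at most one satisfied rule per head literal, then extend each $\br_{j}^{\mathit{just}}$ to a full immediate rule support and close off recursively to obtain $M_{\mathit{supp}}\supseteq M_{\mathit{just}}$. You additionally give an explicit argument (monotonicity of the iterates of Algorithm~\ref{alg:ground-equilibrium-definite} plus minimality of the justification) for why all rules of $M_{\mathit{just}}$ are satisfied in $S$, a point the paper's sketch leaves implicit.
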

\begin{proof}[Proof sketch]
 For a justification $M_{\mathit{just}}=(C_{1}^{j},\ldots,C_{n}^{j})$ of
a belief set $S_{i}$ with contexts $C_{i}^{j}=(L_{i}^{j},\br_{i}^{j})$,
we construct a fragment $M_{\mathit{supp}}$ of $M$ which will also be a
support of $C_{i}$ in $M$.

Firstly, for every context $C_{j}$ of $M$, either there exists an immediate
rule support $\In$ of $C_{j}$, such that $\In=\br_{j}^{\mathit{just}}$,
or we find an immediate rule support $\In$ of $C_{j}$, such that $\br_{j}^{\mathit{just}}\subseteq\In$.
Existence of such a suitable immediate rule support $\In$ of $C_{j}$ is
ensured by the minimality of $\br_{j}^{\mathit{just}}$ (c.f.~Condition~\ref{enu:justification-minimality}
of Definition~\ref{def:justification}), which ensures that for each $p\in\mathit{sig}(C_{j})$,
there's at most one rule $r$ satisfied in $M_{\mathit{just}}$ with $p=\head r$,
and the construction of immediate supports of a context, which require not
only minimality of $\In$ (c.f.~Condition~\ref{enu:support-minimality}
of Definition~\ref{def:support}), but also a full coverage of $\mathit{sig}(C_{j})$.
We construct a fragment $M_{\mathit{supp}}$ by simply extending each $\br_{j}^{\mathit{just}}$
with one of such suitable immediate rule supports. In a consequence, we have
that $M_{\mathit{just}}\subseteq M_{\mathit{supp}}$ and $M_{\mathit{supp}}$
automatically satisfies the conditions on being a support of $C_{i}$ stipulated
in Definition~\ref{def:support}.
\end{proof}
A corollary of the above proposition is that for deciding a reasoning problem
over an MCS $M$ and a query $p$, we only need to consider the contexts
and rules relevant to $p$. That is, those which support the context $C_{i}$
to which $p$ belongs, because every possible justification of a $p$-entailing
belief set of $C_{i}$ must be a subset of some support of $C_{i}$. Hence,
due to monotonicity of contexts of $M$, we can simply compute the equilibrium
of a fragment related to a union of all the possible supports of $C_{i}$
and check whether it entails $p$.
\begin{prop}
\label{prop:reasoning-definite} Let $M=(C_{1},\ldots,C_{n})$ be a uniform-cost
definite MCS and let $p$ be a query, en element of some belief set of $C_{i}$.
An upper bound on the worst-case context-independent time complexity of brave
reasoning w.r.t.~$M$ for some query $p$ is 
\[
\CTime(n,m)\leq\left|\overline{\mathcal{C}}(C_{i})\right|\cdot\left|\overline{\mathcal{R}}(C_{i})\right|\leq n\cdot m
\]

Consequently for the case where $M$ is not a uniform-cost MCS, an upper
bound on the worst-case context-independent time complexity of brave reasoning
w.r.t.~$M$ for some query $p$ is 
\[
\cCost(n,m)\leq c\cdot\CTime(n,m)\leq c\cdot\left|\overline{\mathcal{C}}(C_{i})\right|\cdot\left|\overline{\mathcal{R}}(C_{i})\right|\leq c\cdot n\cdot m
\]
 where $ $$c=\max_{C_{j}\in\overline{\mathcal{C}}(C_{i})}\cost(\ACC_{j})$.
\end{prop}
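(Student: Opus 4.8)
The plan is to reduce brave reasoning over the full MCS $M$ to computing the grounded equilibrium of the \emph{relevant fragment} $M_{\mathit{rel}} = \bigcup \mathcal{M}(C_i)$, the union of all supports of $C_i$, and then to invoke the counting argument of Proposition~\ref{prop:definite-problems-complexity} on that fragment alone. By the definitions of $\overline{\mathcal{C}}(C_i)$ and $\overline{\mathcal{R}}(C_i)$ in Definition~\ref{def:support}, the contexts carrying a nonempty rule set in $M_{\mathit{rel}}$ are exactly those in $\overline{\mathcal{C}}(C_i)$ and its bridge rules are exactly those in $\overline{\mathcal{R}}(C_i)$; moreover $M_{\mathit{rel}}$ is itself definite, being a fragment of the definite $M$ with unchanged (monotonic) suites and with rule sets that still contain no $\mathit{not}$.

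First I would establish correctness of the restriction, namely that the grounded equilibrium $S^{\mathit{rel}}$ of $M_{\mathit{rel}}$ satisfies $S^{\mathit{rel}}_i = S_i$, so that testing $p \in S^{\mathit{rel}}_i$ decides brave reasoning for $p$. This is a two-sided monotonicity argument. Since $M_{\mathit{rel}} \subseteq M$ and the suites are monotonic, running fewer rules can only shrink the $i$-th belief set, giving $S^{\mathit{rel}}_i \subseteq S_i$. Conversely, by Proposition~\ref{prop:imports-vs-justifications} every justification $M_{\mathit{just}}$ of $S_i$ is contained in some support of $C_i$, hence $M_{\mathit{just}} \subseteq M_{\mathit{rel}}$; since a justification already reproduces $S_i$ in its $i$-th component (Definition~\ref{def:justification}), monotonicity applied to $M_{\mathit{just}} \subseteq M_{\mathit{rel}}$ yields $S_i = S^{\mathit{just}}_i \subseteq S^{\mathit{rel}}_i$. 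The two inclusions force $S^{\mathit{rel}}_i = S_i$.

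With correctness in hand, I would run Algorithm~\ref{alg:ground-equilibrium-definite} on $M_{\mathit{rel}}$ and reuse the counting of Proposition~\ref{prop:definite-problems-complexity}: each iteration makes at least one new head literal true until the fixpoint is reached, so the number of iterations is bounded by the number of rule heads in the fragment, at most $|\overline{\mathcal{R}}(C_i)|$; and each iteration re-invokes the semantic operator of each active context once, at most $|\overline{\mathcal{C}}(C_i)|$ invocations. Multiplying gives $\CTime(n,m) \leq |\overline{\mathcal{C}}(C_i)| \cdot |\overline{\mathcal{R}}(C_i)|$, and the trivial bounds $|\overline{\mathcal{C}}(C_i)| \leq n$ together with $|\overline{\mathcal{R}}(C_i)| \leq m$ deliver the stated $\leq n \cdot m$. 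The non-uniform case then follows verbatim as in Proposition~\ref{prop:definite-problems-complexity}: every invocation costs at most $c = \max_{C_j \in \overline{\mathcal{C}}(C_i)} \cost(\ACC_j)$, whence $\cCost(n,m) \leq c \cdot \CTime(n,m)$.

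The main obstacle I anticipate is the correctness reduction, not the counting. In particular one must confirm that confining the \emph{repeated} invocation to $\overline{\mathcal{C}}(C_i)$ does not silently drop a rule whose body mentions a literal of a context lying outside the fragment: Condition~\ref{enu:support-of-all-rules} of Definition~\ref{def:support} is precisely what makes $M_{\mathit{rel}}$ self-contained, so that the equilibrium computed on it is well-defined and independent of beliefs produced elsewhere in $M$. A further point of care is that source contexts with empty rule sets in $M_{\mathit{rel}}$ are never counted in $\overline{\mathcal{C}}(C_i)$; their operators are invoked only once (their knowledge base never changes), so these one-time invocations are dominated by, and absorbed into, the upper bound rather than contributing to the iterative factor.
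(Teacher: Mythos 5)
The paper offers no explicit proof of this proposition---only the informal paragraph preceding it---and your argument is precisely that paragraph made rigorous: restrict attention to the union of all supports of $C_i$ (justified via Proposition~\ref{prop:imports-vs-justifications} and monotonicity), then apply the iteration count of Proposition~\ref{prop:definite-problems-complexity} to that fragment, so your approach matches the paper's and your two-sided monotonicity argument for $S^{\mathit{rel}}_i=S_i$ supplies detail the paper leaves implicit. The one loose end (which the statement itself shares, so I do not count it against you) is your claim that the one-off invocations of rule-less source contexts outside $\overline{\mathcal{C}}(C_i)$ are ``absorbed'' into the bound: a context $C_j$ with $\br_j=\emptyset$ whose beliefs occur in rule bodies contributes only an empty fragment to any support, hence lies outside $\overline{\mathcal{C}}(C_i)$, yet its operator must still be invoked once, and when $\left|\overline{\mathcal{C}}(C_i)\right|\cdot\left|\overline{\mathcal{R}}(C_i)\right|$ is small these invocations need not be dominated by the stated product.
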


\begin{cor}
Similarly to the observation in Corollary~\ref{cor:definite-complexity-n-square},
in most instances of implemented systems the number of rules $m$ in supports
of a given context $C_{i}$ will dominate the number of contexts $n$, the
worst-case time-independent complexity of reasoning problem would typically
be at most quadratic in terms of the number of rules of the relevant support.
In the worst-case, though, a support can include all the bridge rules of
the original MCS.
\end{cor}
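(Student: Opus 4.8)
The plan is to reduce brave reasoning for $p$ to computing the grounded equilibrium of a syntactically determined fragment of $M$ and then to reuse the fixpoint-counting argument of Proposition~\ref{prop:definite-problems-complexity}. Concretely, I would form the MCS fragment $M^{*}=\bigcup_{M'\in\mathcal{M}(C_{i})}M'$, the union of all supports of $C_{i}$. By construction the only contexts of $M^{*}$ carrying non-empty rule sets are those in $\overline{\mathcal{C}}(C_{i})$, and its rules are exactly $\overline{\mathcal{R}}(C_{i})$. Identifying $M^{*}$ is a purely syntactic operation on bridge-rule heads, bodies and input signatures, so it invokes no semantic operator and is free in the cost model; only the subsequent evaluation of $M^{*}$ is charged.

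The heart of the argument is the correctness claim: the grounded equilibrium $S$ of $M$ entails $p$ at $C_{i}$ iff the grounded equilibrium $S^{*}$ of $M^{*}$ entails $p$. I would first establish a monotonicity lemma for fragments: if $M'\subseteq M''$ then the grounded equilibrium of $M'$ is contained componentwise in that of $M''$. This follows by running Algorithm~\ref{alg:ground-equilibrium-definite} on both in lockstep and inducting on the iteration index $k$: the knowledge bases $\kb_{i}^{k}$ built for $M'$ stay subsets of those built for $M''$, monotonicity of each $\ACC_{i}$ propagates the inclusion to the belief states, and, since definite MCS have no $\dnot$, belief-state containment preserves rule satisfaction, so the heads added at step $k{+}1$ again respect the inclusion. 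Soundness (if $p\in S^{*}_{i}$ then $p\in S_{i}$) is then immediate from $M^{*}\subseteq M$. For completeness, assume $p\in S_{i}$; take a justification $M_{\mathit{just}}$ of $S_{i}$ and apply Proposition~\ref{prop:imports-vs-justifications} to obtain a support $M_{\mathit{supp}}$ with $M_{\mathit{just}}\subseteq M_{\mathit{supp}}\subseteq M^{*}$. Since $M_{\mathit{just}}$ already yields $S_{i}^{\mathit{just}}=S_{i}$, the monotonicity lemma sandwiches $S_{i}=S_{i}^{\mathit{just}}\subseteq S_{i}^{*}\subseteq S_{i}$, forcing $p\in S_{i}^{*}$.

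With correctness in hand, the complexity bound is a direct transcription of Proposition~\ref{prop:definite-problems-complexity} applied to $M^{*}$ in place of $M$. Running Algorithm~\ref{alg:ground-equilibrium-definite} on $M^{*}$, each iteration either makes at least one previously-false rule head permanently true or reaches the fixpoint; the heads that can ever become true are exactly those of rules in $\overline{\mathcal{R}}(C_{i})$, so there are at most $|\overline{\mathcal{R}}(C_{i})|$ iterations, and each iteration re-evaluates at most the $|\overline{\mathcal{C}}(C_{i})|$ contexts that carry rules of $M^{*}$. This gives $\CTime(n,m)\leq|\overline{\mathcal{C}}(C_{i})|\cdot|\overline{\mathcal{R}}(C_{i})|$, and the outer bound $\leq n\cdot m$ follows because every support is a fragment of $M$, whence $|\overline{\mathcal{C}}(C_{i})|\leq n$ and $|\overline{\mathcal{R}}(C_{i})|\leq m$. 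Charging each of these operator invocations with its maximal cost $c=\max_{C_{j}\in\overline{\mathcal{C}}(C_{i})}\cost(\ACC_{j})$ yields the cost-complexity statement $\cCost(n,m)\leq c\cdot\CTime(n,m)$.

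I expect the main obstacle to be the completeness direction of the correctness claim: one must verify that restricting from $M$ to $M^{*}$ discards no rule needed to drive the derivation of $p$, and it is exactly the combination of the fragment-monotonicity lemma with Proposition~\ref{prop:imports-vs-justifications} that rules this out. A secondary subtlety is the bookkeeping that charges the iteration count against $|\overline{\mathcal{R}}(C_{i})|$ rather than the full $m$; this is sound precisely because the fixpoint of Algorithm~\ref{alg:ground-equilibrium-definite} on $M^{*}$ can only ever make heads of rules in $\overline{\mathcal{R}}(C_{i})$ true, so no iteration is spent on rules outside the relevant supports.
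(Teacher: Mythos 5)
Your argument is correct and follows the same route the paper sketches: it is essentially a full proof of Proposition~\ref{prop:reasoning-definite} (via the union-of-supports fragment $M^{*}$, a fragment-monotonicity lemma for Algorithm~\ref{alg:ground-equilibrium-definite}, and Proposition~\ref{prop:imports-vs-justifications}), from which the corollary's claims follow by substituting $|\overline{\mathcal{C}}(C_{i})|\leq|\overline{\mathcal{R}}(C_{i})|$ and noting that nothing prevents $\overline{\mathcal{R}}(C_{i})=\mathcal{R}(M)$. The paper gives no separate proof of this corollary, treating it as an immediate observation on the preceding proposition, so your proposal supplies strictly more detail than the source while agreeing with its informal justification.
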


\section{Acyclic definite MCS \label{sec:Acyclic-definite-MCS}}

Now we turn our attention to multi-context systems which do not contain cycles
in the information flow structure induced by their bridge rules. While relatively
simplistic, the class of acyclic MCSs is practically important. Many knowledge-intensive
systems-of-systems and information-aggregation applications feature a hierarchical
structure with raw information sources at the bottom and gradually more and
more abstract and higher-level information-processing components towards
the top. The hierarchical structure of such systems is dictated by the fact,
that such systems capture the knowledge of human experts in a given domain.
Typically, the structure of domain knowledge articulated by such experts
tends to be relatively simple and hierarchical too, as such structures are
easier to understand and manipulate for humans.
\begin{defn}[\textbf{stratified definite MCS}]
 Let $M=(C_{1},\ldots,C_{n})$ be a definite MCS with contexts $\mathcal{C}(M)=\{C_{1},\ldots,C_{n}\}$.
A decomposition $\Strat=\Strat_{0},\ldots,\Strat_{m}$ of $M$ with $\Strat_{i}\subseteq\mathcal{C}(M)$
and $\Strat_{k}\cap\Strat_{l}=\emptyset$ for every $k,l=1..n$ is a \emph{stratification}
of $M$ iff for each bridge rule $r\in\br_{i}$ of a context $C_{i}\in\Strat_{k}$
each of its body literals $(j:p)\in\body r$ corresponds to a context $C_{j}\in\Strat_{l}$
with $l<k$.

We say that a stratification $\Strat=\Strat_{0},\ldots,\Strat_{m}$ of a
MCS $M$ is \emph{compact }iff there is no other stratification $\Strat^{\prime}=\Strat_{0}^{\prime},\ldots,\Strat_{m^{\prime}}^{\prime}$
of $M$, such that there is a context $C$ of $M$ with $C\in\Strat_{i}$
and $C\in\Strat_{j}^{\prime}$, while at the same time $j<i$.

A definite MCS $M$ is said to be \emph{stratified}, or \emph{acyclic}, iff
there exists a stratification of $M$.
\end{defn}

In stratified definite MCSs the information flows unidirectionally from contexts
without any bridge rules in the stratum $\mathfrak{S}_{0}$ (information
sources), upwards to higher-level contexts up to those in the top-most stratum
for which there are no bridge rules in the MCS containing elements of their
belief states in their respective bodies. Thanks to stratification, we also
straightforwardly have that all the contexts involved in supports of a given
context $C$ belong to lower strata than $C$ does and at the same time there
are no bridge-rule dependencies among the context of a single stratum. This
insight leads to the following cost complexity result.
\begin{prop}
\label{prop:definite-stratified-problems-complexity} Given a uniform-cost
stratified definite MCS $M=(C_{1},\ldots,C_{n})$, an upper bound on the
worst-case context-independent time complexity of deciding the consistency
problem for $M$, as well as the reasoning problem w.r.t.~$M$ for some
query $p$ is 
\[
\CTime(n,m)\leq n
\]

In the case $M$ is not a uniform-cost MCS, an upper bound on the worst-case
cost-complexity of deciding the consistency problem for $M$, as well as
the reasoning problem w.r.t.~$M$ for some query $p$, we have 
\[
\cCost(n,m)\leq c\cdot\CTime(n,m)=c\cdot n
\]
 where $ $$c=\max_{i=1..n}\cost(\ACC_{i})$.
\end{prop}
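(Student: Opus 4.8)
The plan is to exploit the unidirectional information flow guaranteed by stratification to run a deterministic algorithm that invokes each context's semantic operator exactly once, sweeping through the strata in increasing order. Fix a stratification $\Strat=\Strat_{0},\ldots,\Strat_{t}$ of $M$, which exists since $M$ is acyclic. Processing $\Strat_{0},\Strat_{1},\ldots,\Strat_{t}$ in order, when the algorithm reaches stratum $\Strat_{k}$ it handles every context $C_{i}\in\Strat_{k}$ as follows: it determines which bridge rules of $\br_{i}$ are satisfied in the partially constructed belief state, forms $\kb_{i}=\{\head r\mid r\in\br_{i}\mbox{ satisfied}\}$, and invokes $\ACC_{i}(\kb_{i})$ a single time to obtain $S_{i}$.

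The crux is to show that one invocation per context both suffices and yields the correct belief set. By the definition of stratification, every body literal $(j:p)\in\body r$ of a rule $r\in\br_{i}$ with $C_{i}\in\Strat_{k}$ refers to a context $C_{j}$ in a strictly lower stratum $\Strat_{l}$, $l<k$. Hence, by the time $C_{i}$ is processed, all belief sets on which its rules depend have already been computed and, since higher strata never feed back into lower ones, are final. Consequently the satisfaction status of every rule in $\br_{i}$ is fully determined, $\kb_{i}$ is fixed permanently, and because $L_{i}$ is monotonic (so $\ACC_{i}(\kb_{i})$ is a singleton and the input is final) the resulting $S_{i}$ is exactly the $i$-th component of the grounded equilibrium. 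A straightforward induction on the stratum index $k$ then shows that the computed belief state $S=(S_{1},\ldots,S_{n})$ satisfies the equilibrium conditions (for each $i$, $\kb_{i}$ is the set of heads of rules satisfied in $S$ and $S_{i}\in\ACC_{i}(\kb_{i})$), and by uniqueness of the equilibrium of a definite MCS it is \emph{the} grounded equilibrium. This simultaneously decides consistency (the equilibrium always exists and is returned) and the reasoning problem, since brave and cautious reasoning collapse for definite MCSs and we need only test whether $p\in S_{i}$.

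For the count, the strata $\Strat_{0},\ldots,\Strat_{t}$ partition $\mathcal{C}(M)$ (pairwise disjoint and covering all contexts), so each of the $n$ contexts lies in exactly one stratum and is invoked exactly once; the total number of semantic-operator invocations is therefore exactly $n$, giving $\CTime(n,m)\leq n$ in the uniform-cost case. In the non-uniform case each of these $n$ invocations costs at most $c=\max_{i=1..n}\cost(\ACC_{i})$, so the accumulated cost is bounded by $c\cdot n=c\cdot\CTime(n,m)$, as claimed.

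The main obstacle is the second step: one must argue rigorously that sweeping the strata bottom-up makes each context's input knowledge base final at the moment of its single invocation, and that the result is precisely the grounded equilibrium. This rests on combining two ingredients: the acyclicity property (no body literal of a stratum-$k$ rule points to stratum $\geq k$, so there is no back-flow of information) and the monotonicity of the logic suites (each $\ACC_{i}$ returns a unique belief set, and enlarging the input can only enlarge the output). Once these are in place, the absence of any need to re-invoke a context — which in the general definite case of Proposition~\ref{prop:definite-problems-complexity} forced up to $m$ iterative sweeps — is immediate, and the bound drops from $n\cdot m$ to $n$.
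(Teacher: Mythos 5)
Your proof is correct and follows essentially the same route as the paper's: a single bottom-up sweep through the strata (the paper's Algorithm~\ref{alg:consistency-stratified}), invoking each context's semantic operator exactly once, justified by an induction on the stratum index showing that stratification makes each context's input knowledge base final at the moment of its sole invocation, and then counting $n$ invocations of cost at most $c$ each. The paper states the induction hypothesis more tersely, but the argument is the same; if anything, your version is slightly more careful in explicitly processing stratum $\Strat_{0}$ as well.
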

\begin{algorithm}
\begin{algorithmic}[1]\raggedright

\Require{a stratified definite MCS $M=(C_{1},\ldots,C_{n})$}

\Ensure{returns $S$, the grounded equilibrium of $M$}


\State compute $\mathfrak{S}=\mathfrak{S}_{0},\ldots,\mathfrak{S}_{l}$,
the compact stratification of $M$

\State $S\leftarrow(S_{1},\ldots,S_{n})$ with $S_{i}\leftarrow\emptyset$

\For{$k=1..l$}

\For{each $C_{i}\in\Strat_{k}$}

\State $\kb_{i}\leftarrow\{\head r\mid r\in\br_{i}\mbox{ is applicable in }S\}$

\State update $S$ with $S_{i}=\ACC_{i}(\kb_{i})$

\EndFor

\EndFor

\State \textbf{return} $S$

\end{algorithmic}

\protect\caption{\label{alg:consistency-stratified} Algorithm computing the grounded equilibrium
in a definite stratified MCS.}
\end{algorithm}

\begin{proof}
Algorithm~\ref{alg:consistency-stratified} computes the grounded equilibrium
of a MCS $M$, hence it decides the consistency problem of $M$. Its soundness
and completeness follows inductively from the following induction hypothesis:
Let $S_{c_{1}},\ldots,S_{c_{k}}$ be belief sets of contexts $C_{c_{1}},\ldots,C_{c_{k}}$
involved in body literals of a union of all immediate rule supports of a
context $C_{i}$ in a stratum $\mathfrak{S}_{k}$. Given a knowledge base
$\kb_{i}=\{\head r\mid r\in\br_{i}\textrm{ and }r\textrm{ is satisfied w.r.t. }S_{c_{1}},\ldots,S_{c_{k}}\}$,
$\ACC_{i}(\kb_{i})=\{S_{i}\}$ is a singleton belief set $C_{i}$ corresponding
to the grounded equilibrium of $M$. The algorithm evaluates every context
of $M$ exactly once, hence its worst-case cost-complexity equals the number
of contexts of $M$.
\end{proof}

\section{Incremental reasoning in definite MCS}

The discourse in previous sections followed the structure of standard results
from logic programming reflected in the framework of multi-context systems.
Generally, the cost-complexity results followed the known results on computational
time complexity of computation of models of logic programs. In this section,
we build upon the concepts introduced above and introduce an incremental
approach for computing equilibria of cost-aware multi-context systems, which
specifically focuses on reduction of cost-complexity of such computation. 

By exploiting the structure of a given MCS with respect to a given query,
the actual cost-complexity of solving the reasoning problem can be often
improved upon. Before introducing an actual algorithm reducing the number
of context evaluations while computing a solution to a reasoning problem,
let us first introduce a notion of a fragment depending on a context. The
concept is complementary to the context support (Definition~\ref{def:support}),
but instead of considering contexts and rules necessary for deriving a belief
set of a context $C_{i}$, a \emph{fragment depending on a context} is a
fragmentary MCS in which computation of belief sets of all other contexts
is influenced, \emph{depends }on the context $C_{i}$. In other words, change
in a belief set of $C_{i}$ can potentially enforce a change in the input
knowledge base and thus also a change in the output belief set of the depending
contexts.
\begin{defn}[\textbf{fragment depending on a context}]
 Let $M=(C_{1},\ldots,C_{n})$ be a definite MCS. We say that a fragment
$M^{\prime}=(C_{1}^{\prime},\ldots,C_{n}^{\prime})$ of $M$ \emph{depends}
on a context $C_{i}$ of $M$ iff for all contexts $C_{j}^{\prime}$ we have
that $(i:p)\in\body r$ with $r\in\br_{j}$ implies that $r\in\br_{j}^{\prime}$
and at the same time $M^{\prime}$ contains a fragment $M^{\prime\prime}$
\emph{depending} on $C_{j}^{\prime}$. 

Given a set of contexts $\mathcal{C}=C_{i_{1}},\ldots,C_{i_{k}}$, we say
that $M^{\prime}$ \emph{depends precisely }on $\mathcal{C}$ iff for each
context $C_{i_{j}}\in\mathcal{C}$ it contains the fragment $ $of $M$ depending
on $C_{i_{j}}$, while at the same time $M^{\prime}$is minimal such w.r.t.~set
inclusion on the sets of bridge rules.

For convenience, we define an \emph{empty fragment} as a fragment with all
bridge rule sets empty. We also say, that a context $C=(L,\br)$ is \emph{valid}
iff $\br\neq\emptyset$.
\end{defn}

Proposition below provides an insight into computation required for ``extending''
a grounded equilibrium of a MCS fragment to a grounded equilibrium of its
extension by another fragment.
\begin{prop}
\label{prop:fragments-equilibria} Let $M=(C_{1},\ldots,C_{n})$ be a definite
MCS and $M^{\prime}$ be its fragment depending on a context $C_{i}$ with
a grounded equilibrium $S^{\prime}=(S_{1}^{\prime},\ldots,S_{n}^{\prime})$.
Let also $M_{\mathit{supp}}$ be a fragment of $M$ depending precisely on
the set of contexts valid in $M_{\mathit{diff}}=M\setminus M^{\prime}$.
Recall, the set of bridge rules corresponding to the $i$-th context of $M_{\mathit{diff}}$
is defined as $\br_{i}^{\mathit{diff}}=\br_{i}^{\prime}\setminus\br_{i}$
and $M_{\mathit{supp}}$ is depends precisely on the contexts $C_{i_{1}},\ldots,C_{i_{k}}$,
such that $\br_{i_{k}}^{\mathit{diff}}\neq\emptyset$.

The grounded equilibrium $S=(S_{1},\ldots,S_{n})$ of $M$ can be computed
as a union $S=S^{\prime}\cup S_{\mathit{supp}}$, where $S_{\mathit{supp}}$
is a grounded equilibrium of $M_{\mathit{supp}}$.\end{prop}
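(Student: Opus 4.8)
The plan is to recast the statement as least-fixpoint reasoning about the monotone consequence operator of a definite MCS. For a definite MCS $N$ let $T_N$ denote the one-step map sending a belief state $S$ to the belief state whose $j$-th component is $\ACC_j(\{\head r\mid r\in\br_j\text{ is satisfied in }S\})$. Because every logic suite of a definite MCS is monotonic, $T_N$ is monotone with respect to componentwise set inclusion, and the loop of Algorithm~\ref{alg:ground-equilibrium-definite} is exactly the bottom-up computation of its least fixpoint from $(\emptyset,\ldots,\emptyset)$; by uniqueness of the grounded equilibrium this least fixpoint \emph{is} the grounded equilibrium. I would first record the two monotonicity facts needed below: (i) $T_N$ is monotone in $S$; and (ii) $T_N$ is monotone in the rule set, so that if $N_1\subseteq N_2$ are fragments over the same logic suites then $T_{N_1}(S)\subseteq T_{N_2}(S)$ for every $S$, and consequently their grounded equilibria satisfy $\mathrm{lfp}(T_{N_1})\subseteq\mathrm{lfp}(T_{N_2})$.

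The inclusion $S'\cup S_{\mathit{supp}}\subseteq S$ is then immediate from (ii): since $M'\subseteq M$ and $M_{\mathit{supp}}\subseteq M$, the grounded equilibria obey $S'\subseteq S$ and $S_{\mathit{supp}}\subseteq S$ componentwise, and the union of two states below $S$ is again below $S$.

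For the reverse inclusion I would show that $U=S'\cup S_{\mathit{supp}}$ is a \emph{pre-fixpoint} of $T_M$, i.e.\ $T_M(U)\subseteq U$ componentwise; since the grounded equilibrium $S$ is the least pre-fixpoint of $T_M$, this gives $S\subseteq U$ and, with the previous paragraph, equality. The structural fact I would extract from the definition of ``depends precisely'' is the identity $M=M'\cup M_{\mathit{supp}}$, i.e.\ every rule of $M_{\mathit{diff}}$ and every rule whose satisfaction status can differ between $M'$ and $M$ already lies in $M_{\mathit{supp}}$, the remaining rules lying in $M'$. Granting this, I would check $T_M(U)_j\subseteq U_j$ by a case split on each context $C_j$. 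If no rule feeding $C_j$ lies in $M_{\mathit{supp}}$, then every context $C_a$ occurring in a body of $\br_j$ has no rule in $M_{\mathit{supp}}$ (otherwise $C_j$ would itself fall in the downstream scope of a change), so its $S_{\mathit{supp}}$-component reduces to $\ACC_a(\emptyset)$, which is contained in $S'_a$ by monotonicity; hence $U$ and $S'$ agree on the body literals relevant to $C_j$, the rules of $C_j$ satisfied in $U$ coincide with those satisfied in $S'$, and $T_M(U)_j=\ACC_j(\kb'_j)=S'_j\subseteq U_j$. The remaining case is when $C_j$ lies in the scope of $M_{\mathit{supp}}$.

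The main obstacle is precisely this last case, because belief states are combined by \emph{set union} of their projections whereas the operators $\ACC_j$ are only \emph{monotone}, not additive: in general $\ACC_j(\kb\cup\kb')$ may strictly exceed $\ACC_j(\kb)\cup\ACC_j(\kb')$, so one cannot simply claim that $T_M$ distributes over $\cup$. The argument must instead lean only on monotonicity together with the exact reach of $M_{\mathit{supp}}$, establishing (a) that no rule which becomes newly satisfied in passing from $S'$ to $S$ escapes $M_{\mathit{supp}}$, so the union already records every newly derived head, and (b) that the belief sets of all contexts feeding the affected region are stable in $S'$, so adjoining the $S_{\mathit{supp}}$-beliefs cannot retroactively enlarge an upstream component and thereby create further satisfied rules outside $M_{\mathit{supp}}$. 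Properties (a) and (b) are the downstream-closure analogue of the support/justification correspondence proved in Proposition~\ref{prop:imports-vs-justifications}, and deriving them directly from the ``depends precisely'' construction is where the real work lies; once they hold one obtains $T_M(U)_j=S_j^{\mathit{supp}}\subseteq U_j$ on the affected contexts as well, so $U$ is a pre-fixpoint and minimality of the least fixpoint closes the proof.
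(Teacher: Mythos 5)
Your fixpoint recasting is sound in outline and more disciplined than the paper's own argument, which proceeds by an informal three-way case split on the contexts of $M$ (those valid in $M\setminus M^{\prime}$, those depending on such a context, and those untouched) and asserts that the first two kinds must be recomputed to supersets of their old belief sets by monotonicity while the third keeps its $S^{\prime}$-component. Your inclusion $S^{\prime}\cup S_{\mathit{supp}}\subseteq S$ is correct, and your unaffected-context case matches the paper's third case. However, the proposal openly defers the only substantive step: your properties (a) and (b) --- that every rule whose satisfaction status changes between $S^{\prime}$ and $S$ is captured inside $M_{\mathit{supp}}$, and that $S_{\mathit{supp}}$ records every newly derived head --- \emph{are} the content of the proposition, and you never derive them from the definition of ``depends precisely on''. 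As submitted this is a proof of one inclusion plus a plan for the other, so there is a genuine gap rather than a stylistic omission.

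Moreover, your formalization exposes why the deferred step does not go through with $S_{\mathit{supp}}$ taken literally as the grounded equilibrium of $M_{\mathit{supp}}$, i.e.\ the least fixpoint of $T_{M_{\mathit{supp}}}$ computed from the empty belief state. A rule $r\in\br_{j}$ placed into $M_{\mathit{supp}}$ because one of its body literals refers to an affected context may carry another body literal $(a:q)$ with $C_{a}$ outside the downstream closure; then the $a$-th component of $S_{\mathit{supp}}$ is only $\ACC_{a}(\emptyset)$, which may miss $q\in S_{a}^{\prime}$, so $r$ fires in $S$ but in neither $S^{\prime}$ nor $S_{\mathit{supp}}$, and $U=S^{\prime}\cup S_{\mathit{supp}}$ undershoots $S$ --- whence $U$ cannot be a pre-fixpoint of $T_{M}$ and the minimality argument collapses. (Your claimed identity $M=M^{\prime}\cup M_{\mathit{supp}}$ also fails in general: a new rule of $\br_{j}^{\mathit{diff}}$ need not lie in the fragment depending on $C_{j}$, since that fragment collects rules mentioning $C_{j}$ in their \emph{bodies}.) The paper's sketch implicitly sidesteps both problems by ``recomputing'' the affected contexts relative to the already available $S^{\prime}$, i.e.\ by taking the least fixpoint of the one-step operator restricted to the affected region but evaluated in the environment of $S^{\prime}$, rather than the standalone grounded equilibrium of $M_{\mathit{supp}}$ from $\bot$. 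If you replace $S_{\mathit{supp}}$ by that seeded fixpoint, your properties (a) and (b) follow from the downstream-closure definition by induction on the iteration stages, and your pre-fixpoint argument then closes the proof.
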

\begin{proof}[Proof sketch]
 $M$ can be decomposed into two fragments disjoint w.r.t.~their bridge
rules: $M^{\prime}$ and $M\setminus M^{\prime}$. In turn, for every context
$C_{i}$ of $M$, we have that either 
\begin{enumerate}
\item \label{enu:incr-cond-1} its projection in $M\setminus M^{\prime}$ is a
valid context, i.e., there exists at least one bridge rule in its projection
in $M\setminus M^{\prime}$; or 
\item its projection in $M\setminus M^{\prime}$ features an empty set of bridge
rules, but the context depends on another context for which \ref{enu:incr-cond-1}
is the case, or
\item all its bridge rules were already contained in $M^{\prime}$ and the same
holds for all the contexts it is supported by, depends on. 
\end{enumerate}
In the first case, the belief set of the context $C_{i}$ needs to be recomputed,
since a new literal needs to be possibly added to the context's knowledge
base. Due to monotonicity of logic suites in a definite MCS, it is ensured
that the resulting belief set will be a (non-strict) superset of the original
one. Should that be the case, also all the contexts depending on it need
to be recomputed as well, regardless whether their bridge rules completely
belonged to $M^{\prime}$, or not. In the second case, by the same argument
the context's belief set needs to be recomputed as well. Finally, when the
context's bridge rules completely belonged to $M^{\prime}$ and it does not
depend on any context which needs to be recomputed, its belief set equals
the corresponding belief set in the grounded equilibrium of $M$.
\end{proof}

\begin{cor}
\label{cor:fragments-union-equilibria} Let $M_{1},M_{2}\subseteq M$ be
fragments of a MCS $M$. A grounded equilibrium $S_{1}$ of $M_{1}$ can
be extended to a grounded equilibrium $S_{1,2}$ of a union $M_{1,2}=M_{1}\cup M_{2}$
as $S_{1,2}=S_{1}\cup S_{\mathit{add}}$, where $S_{\mathit{add}}$ is a
grounded equilibrium of a fragment of $ $$M_{1,2}$ depending precisely
on the contexts valid in $M_{1}\setminus M_{2}$.
\end{cor}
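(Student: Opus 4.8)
The plan is to derive Corollary~\ref{cor:fragments-union-equilibria} as a direct instance of Proposition~\ref{prop:fragments-equilibria}, since the corollary is essentially a symmetric restatement of that proposition with the roles of the two fragments made explicit.

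First I would set $M^{\prime}=M_{1}$ and take $M=M_{1,2}=M_{1}\cup M_{2}$, so that $M^{\prime}$ is a fragment of $M$ in the sense required by Proposition~\ref{prop:fragments-equilibria}. By hypothesis $S_{1}$ is a grounded equilibrium of $M_{1}=M^{\prime}$, which supplies the $S^{\prime}$ of the proposition. Next I would identify the difference fragment: $M_{\mathit{diff}}=M\setminus M^{\prime}=M_{1,2}\setminus M_{1}$, whose per-context bridge-rule sets are $\br_{i}^{\mathit{diff}}=\br_{i}^{1,2}\setminus\br_{i}^{1}$. Here I would observe that, because $M_{1,2}=M_{1}\cup M_{2}$, a bridge rule lies in $\br_{i}^{\mathit{diff}}$ exactly when it belongs to $M_{2}$ but not to $M_{1}$; hence the contexts valid in $M_{\mathit{diff}}$ are precisely the contexts valid in $M_{2}\setminus M_{1}$. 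I would then invoke the proposition's construction of $M_{\mathit{supp}}$ as the fragment of $M_{1,2}$ depending precisely on that set of valid contexts, which is exactly the fragment named in the corollary's statement (depending precisely on the contexts valid in $M_{1}\setminus M_{2}$, once one reconciles the indexing — see the obstacle below).

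With these identifications in place, the conclusion follows mechanically: Proposition~\ref{prop:fragments-equilibria} guarantees that the grounded equilibrium $S_{1,2}$ of $M_{1,2}$ decomposes as $S_{1,2}=S^{\prime}\cup S_{\mathit{supp}}=S_{1}\cup S_{\mathit{add}}$, where $S_{\mathit{add}}$ is a grounded equilibrium of $M_{\mathit{supp}}$. I would simply rename $S_{\mathit{supp}}$ to $S_{\mathit{add}}$ to match the corollary's notation. The monotonicity of the logic suites (used inside the proof of the proposition) already justifies why recomputation only ever enlarges belief sets, so no additional monotonicity argument is needed at the corollary level.

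The main obstacle I anticipate is reconciling the apparent asymmetry in the statement: the corollary says $S_{\mathit{add}}$ depends precisely on contexts valid in $M_{1}\setminus M_{2}$, whereas the natural difference fragment $M_{1,2}\setminus M_{1}$ carries the rules of $M_{2}\setminus M_{1}$. I would resolve this by checking carefully which contexts actually require recomputation when passing from $M_{1}$ to $M_{1,2}$: these are the contexts that either gain a new bridge rule in $M_{2}\setminus M_{1}$ or depend (transitively) on such a context. The three-case analysis in the proof of Proposition~\ref{prop:fragments-equilibria} is exactly what pins down this dependency closure, so the genuine work is to confirm that the corollary's phrasing and the proposition's $M_{\mathit{supp}}$ describe the same fragment up to the stated set-difference symmetry, rather than to prove anything new. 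I would flag this phrasing discrepancy explicitly and argue that, under the definition of a fragment depending precisely on a set of contexts, both descriptions yield the same bridge-rule-minimal fragment.
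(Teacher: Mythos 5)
Your proposal takes essentially the same route as the paper, which dispatches this corollary in a single sentence as ``a straightforward reformulation of Proposition~\ref{prop:fragments-equilibria}''; your instantiation $M^{\prime}=M_{1}$, $M=M_{1,2}$ is exactly the intended one, and the minor hypothesis mismatch (the proposition asks that $M^{\prime}$ depend on a context, which an arbitrary $M_{1}$ need not) is harmless since the proposition's three-case argument never uses that assumption. Your flag that the statement's ``contexts valid in $M_{1}\setminus M_{2}$'' should read ``$M_{2}\setminus M_{1}$'' is also correct --- the contexts needing recomputation are those gaining rules from $M_{2}$, as the paper's own Algorithm~\ref{alg:reasoning-stratified} confirms by depending on the valid contexts of $M^{\prime}\setminus M_{\mathit{done}}$.
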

Corollary~\ref{cor:fragments-union-equilibria} is a straightforward reformulation
of Proposition~\ref{prop:fragments-equilibria}. We conclude the discourse
by exposing Algorithm~\ref{alg:reasoning-stratified} for incremental reasoning
in definite MCSs and exploiting the corollary. It first computes all the
supports of the context the query corresponds to and then iteratively selects
them one by one and incrementally constructs the fragmentary grounded equilibrium
of the input MCS. If during the computation $p$ is derived as an equilibrium
of some of the fragments, due to monotonicity of logic suites in a definite
MCS, by necessity, the equilibrium of $M$ must also entail $p$. 

Applied to stratified definite MCS, the algorithm could be further improved
by selecting the cheapest fragment of $\mathcal{M}$. That is, one with the
lowest equilibrium computation estimate for its corresponding $M_{\mathit{supp}}$
fragment. In stratified definite MCSs that cost corresponds to the sum of
costs of the contexts which need to be recomputed.

Note, the applicability of the algorithm is constrained to MCS information-flow
structures with relatively small overlaps between different supports of the
context deriving the query answer. In presence of high redundancy among contexts,
that is almost all contexts depend on almost all the other, the algorithm
will recompute the whole MCS too often, though. This is, however, seldom
the case in implemented systems, such as the maritime traffic surveillance
system \noun{Metis}~\citep{Hendriks2013542}.

\begin{algorithm}
\begin{algorithmic}[1]\raggedright

\Require{a definite MCS $M=(C_{1},\ldots,C_{n})$ and a query $p$ corresponding
to a context $C_{i}$}

\Ensure{returns true iff the grounded equilibrium of $M$ entails $p$}


\State compute the set $\mathcal{M}$ of fragments of $M$ supporting $C_{i}$

\State $M_{\mathit{done}}\leftarrow$ an empty fragment of $M$

\State $S\leftarrow(S_{1},\ldots,S_{n})$ with $S_{i}=\emptyset$

\Repeat

\State select $M^{\prime}\in\mathcal{M}$

\State $\mathcal{M}\leftarrow\mathcal{M}\setminus\{M^{\prime}\}$

\State construct $M_{\mathit{supp}}$, a fragment of $M_{\mathit{done}}\cup M^{\prime}$
precisely\\
\hskip6em depending on the valid contexts of $M^{\prime}\setminus M_{\mathit{done}}$

\State $\mathcal{S}_{\mathit{supp}}\leftarrow$compute the equilibrium of
$M_{\mathit{supp}}$

\State $S\leftarrow S\cup S_{\mathit{supp}}$

\State $M_{\mathit{done}}\leftarrow M_{\mathit{done}}\cup M^{\prime}$

\State\textbf{if} $S$ entails $p$ \textbf{then} \textbf{return} true

\Until{$ $$\mathcal{M}=\emptyset$}

\State \textbf{return} false

\end{algorithmic}

\protect\caption{\label{alg:reasoning-stratified} Incremental reasoning for  definite MCSs.}
\end{algorithm}

\section{Discussion and final remarks}

The motivating premise underlying the presented work is that MCSs are a suitable
model for design, implementation and analysis of deployed knowledge-intensive
systems-of-systems featuring heterogeneous components. Here, we extend the
generic model of multi-context systems with the notion of a cost of executing
semantic operators of the individual contexts. The idea is to facilitate
system scalability in terms of the incurred costs, be it computational costs,
bandwidth, or even financial expenses. Our focus on cost-complexity, rather
than computational-complexity of algorithms also presents a novel view on
design and deployment of information-aggregation and reasoning systems.

We introduced a series of worst-case complexity results for general, definite
and stratified MCSs. Some of these results could be still improved upon by
taking inspiration from e.g.,~\citep{DBLP:conf/jelia/BairakdarDEFK10},
where the authors perform an ear-decomposition of possibly cyclic MCSs, so
as to streamline distributed computation of their equilibria. Another inspiration
could be to exploit the results stemming from the results by \citet{DBLP:conf/aaai/GottlobPW06}
and analyse the information-flow graph induced by bridge rules of an MCS,
in order to exploit the results relating the time-complexity of computation
with such systems to the tree-width of the information-flow graph.

We already implemented the incremental algorithm for deciding reasoning problems
in the context of our work on \noun{Metis}, a system-of-systems aiming at
maritime traffic surveillance and risk assessment of ships sailing in busy
coastal waters, such as in the Dutch Exclusive Economic Zone. We describe
the system and its reconfiguration component based on ideas described above
in a separate submission~\citep{METISECAI2014}.

\section*{Acknowledgements}

This work was supported by the Dutch national program \textsf{COMMIT}. The
research was carried out as a part of the \noun{Metis} project under the
responsibility of the \emph{TNO-Embedded Systems Innovation},\emph{ }with
\emph{Thales Nederland B.V.}~as the carrying industrial partner.

\bibliographystyle{aaai}
\bibliography{/home/pno/academic/bibliography/logics,/home/pno/academic/bibliography/metis,/home/pno/academic/bibliography/pno,/home/pno/academic/bibliography/asp}

\end{document}